\documentclass{article}

\usepackage[preprint]{neurips_2026}

\usepackage[utf8]{inputenc} 
\usepackage[T1]{fontenc}    
\usepackage{microtype}
\usepackage{graphicx}
\usepackage{subcaption}
\usepackage{booktabs} 
\usepackage{array}
\usepackage{tabularx}
\usepackage{float}          
\usepackage{nicefrac}       
\usepackage{xcolor}         
\usepackage{url}            
\newcolumntype{L}{>{\raggedright\arraybackslash}X}

\usepackage{hyperref}

\usepackage{amsmath}
\usepackage{amssymb}
\usepackage{amsfonts}
\usepackage{mathtools}
\usepackage{amsthm}

\usepackage[capitalize,noabbrev]{cleveref}

\theoremstyle{plain}
\newtheorem{theorem}{Theorem}[section]
\newtheorem{proposition}[theorem]{Proposition}

\theoremstyle{definition}

\theoremstyle{remark}

\usepackage[disable,textsize=tiny]{todonotes}

\title{Gated Normalization Removal and Scale Anchoring in Pre-Norm Transformers}

\author{%
  Andrei Kanavalau\thanks{Corresponding author: \texttt{kanaval@stanford.edu}.} \\
  Department of Electrical Engineering\\
  Stanford University\\
  Stanford, USA \\
  \And
  Carmen Amo Alonso \\
  Department of Computer Science\\
  Stanford University\\
  Stanford, USA \\
  \And
  Sanjay Lall \\
  Department of Electrical Engineering\\
  Stanford University\\
  Stanford, USA \\
}

\begin{document}

\maketitle

\begin{abstract}
  Normalization layers are standard in transformers, but it is not clear whether their sample-dependent computations are necessary throughout both training and inference. This work develops a gated normalization-removal approach for pre-norm transformers. The approach is implemented using TaperNorm, which starts from standard RMSNorm/LayerNorm and gradually tapers to learned sample-independent linear or affine maps. Once the gate reaches zero, per-token statistics are no longer computed in the tapered layers and the resulting maps can be folded into adjacent linear projections. The results indicate that internal normalization can be tapered in the tested pre-training and fine-tuning settings with small validation-loss increases. Our approach helps reveal a distinct role for final normalization, namely that it anchors the scale of the pre-logit representation. With this anchor present, radial changes in the last hidden state do not directly reduce the loss; when it is removed, reducing cross-entropy can be achieved by increasing logit magnitudes. A fixed-target scale loss provides an explicit alternative anchor and enables fully norm-free ablations in the tested regimes. Finally, in a KV-cached autoregressive decoding benchmark, tapering internal norms gives up to $1.14\times$ higher throughput with explicit scaling operations and up to $1.18\times$ after folding.
\end{abstract}

\section{Introduction}

Normalization layers combined with residual connections were instrumental in making very deep networks trainable. Residual paths preserve gradients \citep{he2015deepresiduallearningimage} while normalization layers such as BatchNorm, LayerNorm, and RMSNorm stabilize activations \citep{ioffe2015batchnormalizationacceleratingdeep,ba2016layernormalization,zhang2019rootmeansquarelayer}. LayerNorm and RMSNorm are commonly used in transformer based language models where normalization is applied per-token. Such normalization introduces a nonlinear data-dependent operation into every sublayer and incurs computational cost. This work investigates whether internal per-token normalization in pre-norm transformers must remain active after training.

To this end we introduce a gated normalization-removal approach. The corresponding module, TaperNorm, begins as ordinary RMSNorm/LayerNorm and then tapers to a sample-independent linear or affine map. A single global gate controls the transition. Early in training the model behaves exactly like a pre-norm transformer. As the gate decays, selected normalizers are smoothly tapered toward learned sample-independent scaling. At convergence, per-token statistics are no longer used in the tapered layers and their effect can be absorbed into adjacent linear projections.

This work uses theory and empirical observations to identify the distinct role played by the final normalization layer that sets it apart compared to other normalization layers inside the model. The analysis shows that final normalization controls the scale of the pre-logit representation. A final normalization layer removes radial gradients at the output, so the loss cannot be reduced by simply increasing the last hidden-state norm. Without such an anchor, cross-entropy has a radial component that increases logit magnitudes. This makes removal of the final normalization layer challenging. We find that a fixed-target auxiliary loss on the pre-logit residual stream introduces a counteracting radial restoring force that allows us to study the behavior in more detail.

In our default setting, Internal-Taper, the internal block normalizers are tapered while the final normalization remains active. This removes the repeated normalizers inside transformer blocks while preserving the final normalization as an implicit output scale anchor. In the All-Taper setting, the final normalization is tapered as well. This produces a fully norm-free model at inference, but is more fragile and is used mainly to investigate the role of output scale anchoring.

Our key contributions are:
\begin{itemize}
    \item A gated normalization-removal approach that tapers selected RMSNorm/LayerNorm modules to fixed linear or affine rescalings, allowing per-token statistics to be removed at inference.
    \item Analysis of output scale anchoring in pre-norm transformers, showing that final normalization removes radial gradients, while minimizing cross-entropy induces logit growth if radial gradients are left unchecked.
    \item Experiments on TinyStories and GPT-2 characterizing internal normalization removal, with small loss increases in the tested regimes and up to $1.18\times$ higher throughput after folding internal scalings.
\end{itemize}

\section{Related Work}\label{sec:related}

Normalization has been central to stabilizing deep networks. BatchNorm improves optimization by normalizing intermediate activations \citep{ioffe2015batchnormalizationacceleratingdeep}, while LayerNorm removes batch coupling and became standard in sequence models \citep{ba2016layernormalization}. RMSNorm retains scale normalization but omits mean subtraction, reducing computational cost while preserving rescaling invariance \citep{zhang2019rootmeansquarelayer}. In transformers, the placement of normalization affects both optimization and hidden-state statistics. The pre-norm configuration improves training stability relative to the original post-norm architecture \citep{xiong2020layernormalizationtransformerarchitecture}. More recent work revisits this design choice and proposes Peri-LN, which places normalization around sublayers to better balance variance growth and gradient flow in large-scale transformers \citep{kim2025perilnrevisitingnormalizationlayer}.

A number of approaches reduce reliance on normalization by changing initialization, residual scaling, or parameterization. Fixup trains residual networks without normalization using carefully scaled initialization and learned scalar biases \citep{zhang2019fixupinitializationresiduallearning}. ReZero introduces zero-initialized residual gates and enables training very deep networks, including deep transformers \citep{bachlechner2020rezeroneedfastconvergence}. LayerScale and DeepNorm use residual scaling to improve optimization of deeper transformers \citep{touvron2021goingdeeperimagetransformers,wang2022deepnetscalingtransformers1000}. NF-Nets remove BatchNorm from ConvNets using adaptive gradient clipping and architecture changes \citep{brock2021highperformancelargescaleimagerecognition}. Deep Kernel Shaping and related work investigate signal propagation in networks without skip connections or normalization, including transformer variants \citep{martens2021rapidtrainingdeepneural,he2023deeptransformersshortcutsmodifying}. Finally, $\sigma$Reparam stabilizes transformer training by controlling spectral norms and preventing attention entropy collapse \citep{zhai2023stabilizingtransformertrainingpreventing}. These methods aim to make training stable without standard normalization. In contrast, this work keeps standard pre-norm dynamics early in training and removes selected normalizers gradually.

Several recent works replace normalization layers directly. Dynamic Tanh (DyT) replaces LayerNorm or RMSNorm with a learned elementwise nonlinearity and achieves competitive performance across several transformer settings \citep{zhu2025transformersnormalization}. More recent pointwise replacements such as Derf further investigate which bounded elementwise functions can replace normalization \citep{chen2026strongernormalizationfreetransformers}. These methods remove sample-dependent statistics but leave a nonlinear module in the inference graph. The approach in this work differs in that the tapered normalizer converges to a linear or affine map, which can be absorbed into adjacent projections. As a result, the explicit normalization operation can be removed from the inference graph after training.

Closest to this work is \citet{baroni2025transformersdontneedlayernorm}, who show that LayerNorm layers in GPT-2 can be removed at inference time by fine-tuning and replacing them with linear surrogates. Their method removes LayerNorm blocks sequentially and uses an auxiliary loss that encourages tokenwise standard deviations to be consistent across sequence positions. Our work instead uses a single global gate to taper all selected normalizers at once. It also studies normalization removal during pre-training and separates internal normalization removal from the role of the final normalization layer as an output scale anchor.

The scale-anchoring analysis is related to work on output-scale control. LogitNorm constrains the logit vector norm during training and is motivated by the observation that cross-entropy can increase logit norms and produce overconfident predictions \citep{wei2022mitigatingneuralnetworkoverconfidence}. More generally, gradient descent on separable data with logistic or cross-entropy-type losses can drive predictor norms to infinity while the direction converges to a margin solution \citep{soudry2024implicitbiasgradientdescent}. This work studies the same type of radial pressure at the level of the pre-logit residual stream.

\section{Method}\label{sec:method}

\subsection{Pre-norm transformer and notation}

We consider models with embedding size $d$ and sequence length $T$. Let $H^{(0,0)}\in\mathbb{R}^{T\times d}$ denote the input to the first block. Each layer $j$ has multi-head self-attention (MHSA) and a feedforward network (MLP). Writing $\mathrm{Attn}(H;\theta_{j,\mathrm{A}})$ and $\mathrm{MLP}(H;\theta_{j,\mathrm{M}})$ for these components, the pre-norm residual updates are
\begin{align*}
H^{(j,1)} &= H^{(j,0)} + \mathrm{Attn}\!\left(\mathrm{Norm}^{\mathrm{A}}_j\!\big(H^{(j,0)}\big);\ \theta_{j,\mathrm{A}}\right),\\
H^{(j,2)} &= H^{(j,1)} + \mathrm{MLP}\!\left(\mathrm{Norm}^{\mathrm{M}}_j\!\big(H^{(j,1)}\big);\ \theta_{j,\mathrm{M}}\right),\\
H^{(j+1,0)} &= H^{(j,2)},
\end{align*}
for $j=0,\ldots,L-1$. In the TinyStories pre-training experiments we use SwiGLU activations and RoPE positional encoding.

After the $L$ layers, we apply an output map $\mathrm{Norm}_{\mathrm{final}}$ tokenwise. In our main experiments $\mathrm{Norm}_{\mathrm{final}}$ is a standard RMSNorm or LayerNorm. In ablations that remove output normalization, we implement $\mathrm{Norm}_{\mathrm{final}}$ as a tapered layer whose gate is decayed to $g=0$ over training. We represent token vectors as row vectors, so linear maps act on the right. A learned projection $W_{\mathrm{out}}\in\mathbb{R}^{d\times V}$ yields logits
\[
z \;=\; \mathrm{Norm}_{\mathrm{final}}\!\big(H^{(L,0)}\big)\,W_{\mathrm{out}} \in \mathbb{R}^{T\times V}.
\]
For a token at position $t$, let $z_t\in\mathbb{R}^V$ denote the $t$-th row of $z$ and let $y_t$ be its target id.
The token-level cross-entropy is $\ell(z_t,y_t) = -\log \mathrm{softmax}(z_t)_{y_t}$.

\subsection{LayerNorm and RMSNorm}

Given token row vector $h\in\mathbb{R}^{1\times d}$, LayerNorm and RMSNorm are
\begin{equation}
\begin{aligned}
\mathrm{LayerNorm}(h) &= \frac{h - \mu_h}{\sigma_h}\,D_\gamma + \beta, \quad
\mu_h=\tfrac{1}{d}\sum_{i=1}^d h_i,&\quad\sigma_h=\sqrt{\tfrac{1}{d}\sum_{i=1}^d (h_i-\mu_h)^2+\varepsilon},\\
\end{aligned}
\end{equation}
\begin{equation}
\begin{aligned}
\mathrm{RMSNorm}(h)   &= \frac{h}{r(h)}\,D_\gamma, \quad r(h)&=\sqrt{\|h\|_2^2/d+\varepsilon},
\end{aligned}
\end{equation}
with learnable per-feature scales $\gamma \in \mathbb{R}^d$ (and bias $\beta \in \mathbb{R}^d$ for LayerNorm).
We write $D_\gamma \coloneqq \mathrm{diag}(\gamma)$ and $D_{\tilde\gamma} \coloneqq \mathrm{diag}(\tilde\gamma)$ for diagonal scalings.

\subsection{TaperNorm}\label{subsec:taper}

TaperNorm dynamically transitions from RMSNorm/LayerNorm to a fixed linear or affine map of features. For gate $g\in[0,1]$ and token $h\in\mathbb{R}^{1\times d}$,
\begin{equation}\label{eq:taper}
\mathrm{TaperNorm}(h;g)
\;=\;
g\,\frac{h}{r(h)}\,D_\gamma
\;+\;
(1-g)\,c\,h\,D_{\tilde\gamma},
\end{equation}
where $r(h)$ is as in RMSNorm, $c\in\mathbb{R}$ is a per-layer scalar, and $\tilde\gamma\in\mathbb{R}^d$ is a per-feature gain that remains trainable during tapering. The LayerNorm variant, used in GPT-2 experiments, is analogous and can be found in Appendix~\ref{app:TaperLN}. For $g=1$ we recover RMSNorm/LayerNorm. For $g=0$ we obtain a sample-independent scaling that can be fused into downstream linear operations.

We keep $g(k)=1$ during an initial gate warmup period to obtain standard normalized dynamics and to accumulate EMA statistics used to set $c$. At the end of gate warmup (the taper start), each TaperNorm layer computes its own $c^\star$ from $\gamma$-weighted EMAs (Section~\ref{subsec:cstar}), freezes this scalar for the remainder of training, and initializes $\tilde\gamma\leftarrow\gamma$. Both $\gamma$ and $\tilde\gamma$ remain trainable and continue to update until convergence. After taper start, $g(k)$ is cosine-decayed from $1$ to $0$. This same scalar $g(k)$ is applied to every TaperNorm layer that is being tapered. At convergence, $g=0$ and per-token statistics are no longer used.

When $g=0$, TaperNorm can be removed from the inference graph by weight folding. For the RMSNorm case, $\mathrm{TaperNorm}(h;0)=c\,h\,D_{\tilde\gamma}$. If the next operation is a linear map $x\mapsto xW$ (with an optional bias), then we can replace $W$ by $c\,D_{\tilde\gamma}W$ and remove the explicit TaperNorm. The same folding applies to attention and MLP input projections. For the LayerNorm variant, the $g=0$ map is affine and can be folded into a following linear layer as described in Appendix~\ref{app:TaperLN}.

The tapering procedure can be applied to any subset of normalization layers. This work primarily applies it to internal normalizers. This setting gives most of the inference benefit because internal normalizers are repeated throughout the transformer blocks, while the final normalization is applied only once before the output projection. It also keeps the final normalization as an implicit scale anchor. All-Taper ablations are reported to investigate the fully norm-free setting and to test whether an explicit scale anchor can substitute for the final normalization.

\subsection{Calibrating the scaling branch}\label{subsec:cstar}

Each tapered normalizer interpolates between a normalized branch and a sample-independent scaling branch. When the gate begins to decay after warmup, we would like the two branches to agree as closely as possible so the interpolation does not introduce an avoidable distribution shift. We therefore align the scaling branch to the normalization branch at the warmup boundary by setting $D_{\tilde\gamma}\leftarrow D_\gamma$ and choosing a single scalar $c$ that best matches normalized activations in least squares
\begin{equation}\label{eq:cstar}
\begin{aligned}
c^\star \in& \arg\min_{c}
\mathbb{E}\!\left[\left\|\tfrac{h}{r(h)}D_\gamma - c\,hD_\gamma\right\|_2^2\right]
= \frac{\mathbb{E}\!\left[\|hD_\gamma\|_2^2/r(h)\right]}
        {\mathbb{E}\!\left[\|hD_\gamma\|_2^2\right]}\,.
\end{aligned}
\end{equation}
This matches the two inputs in energy under the layer's current scaling, not in raw coordinates. The derivation is given in Appendix~\ref{app:proofs:cstar}. In implementation, the expectations in Eq.~\eqref{eq:cstar} are estimated during warmup using EMAs of the numerator and denominator. At taper start, bias correction is applied, $c$ is set to the resulting ratio, and $\gamma$ is copied to $\tilde\gamma$. The scalar $c$ is then frozen, while $\gamma$ and $\tilde\gamma$ remain trainable.

\subsection{Fixed-target scale anchoring loss}\label{subsec:aux}

The fixed-target scale loss is separate from the tapering mechanism. It is used to anchor the scale of the pre-logit residual stream. In Internal-Taper, the final normalization remains the main output scale anchor. In All-Taper, where the final normalization is also removed, the fixed-target loss provides an explicit alternative anchor. Let token vectors before the final normalization, rows of $H^{(L,0)}$, be denoted using $h_{t}$. Define a per-token scale statistic
\[
s(h)=
\begin{cases}
r(h)=\sqrt{\|h\|_2^2/d+\varepsilon} & \text{\small for RMSNorm} ,\\
\sigma_h=\sqrt{\tfrac{1}{d}\sum_{i=1}^d (h_i-\mu_h)^2+\varepsilon} & \text{\small for LayerNorm} .
\end{cases}
\]
The auxiliary loss penalizes deviations of token scales from a single scalar target
\begin{equation}\label{eq:aux-fixed}
\mathcal{L}_{\mathrm{aux}}
\;=\;
\lambda\,\mathbb{E}_{b,t}\!\left( s(h_{b,t}) - s_{\mathrm{tgt}} \right)^2,
\end{equation}
where $(b,t)$ index batch elements and sequence positions, and the expectation is taken over non-padding tokens. While $g=1$ (gate warmup) we track an EMA of the batch mean $\hat s_k=\mathbb{E}_{b,t}[s(h_{b,t})]$, and at taper start we freeze $s_{\mathrm{tgt}}$ to the bias-corrected EMA value. After taper start, $s_{\mathrm{tgt}}$ remains constant for the remainder of training. We train with the combined objective $\mathcal{L}=\mathcal{L}_{\mathrm{CE}}+\mathcal{L}_{\mathrm{aux}}$, where $\mathcal{L}_{\mathrm{CE}}$ is token-level cross-entropy.

\section{Scale anchoring}\label{sec:theory}

This section analyzes one role of normalization that becomes important when the final normalization layer is removed, namely, control of the scale of the pre-logit representation. When the final normalization is removed, the scale of the last hidden state becomes a degree of freedom that can drift during training. A final normalization layer anchors it implicitly, while the fixed-target scale loss anchors it explicitly.

We analyze a single token vector $h\in\mathbb{R}^{1\times d}$ that is input to the output projection. The logits are
\[
z = \mathrm{Norm}_{\mathrm{final}}(h)\,W_{\mathrm{out}} \in \mathbb{R}^{1\times V}.
\]
Only the radial component $\langle \nabla_h \ell(z,y), h\rangle$ can change $\|h\|_2$ to first order. The propositions below characterize this radial component for three cases that correspond to the design choices in our method. Proofs can be found in Appendix~\ref{app:proofs}.

\begin{proposition}[Final normalization removes radial gradient]\label{prop:tangent}
Let the final map $\mathrm{Norm}_{\mathrm{final}}$ before the output projection be $0$-homogeneous and differentiable almost everywhere. This includes the idealized RMSNorm and LayerNorm maps with $\varepsilon=0$. For logits $z=\mathrm{Norm}_{\mathrm{final}}(h)\,W_{\mathrm{out}}$ and any differentiable loss $\ell(z,y)$,
\[
\langle \nabla_h\,\ell(z,y),\,h\rangle = 0 \quad (h\neq 0).
\]
\end{proposition}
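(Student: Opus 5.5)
The plan is to apply Euler's identity for homogeneous functions to $\mathrm{Norm}_{\mathrm{final}}$ and then push it through the chain rule. Write $N=\mathrm{Norm}_{\mathrm{final}}$ and fix $h\neq 0$ at which $N$ is differentiable. By $0$-homogeneity, $N(\alpha h)=N(h)$ for all $\alpha>0$. We need positive homogeneity only, so LayerNorm with its bias $\beta$ is covered, since $\beta$ is added after the scale-invariant part. Differentiating the identity in $\alpha$ at $\alpha=1$ gives the directional derivative
\[
J_N(h)\,h^\top \;=\; \frac{d}{d\alpha}N(\alpha h)\Big|_{\alpha=1} \;=\; 0 .
\]
Here $J_N(h)\in\mathbb{R}^{d\times d}$ is the Jacobian of $N$ at $h$. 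In words, the radial direction lies in the kernel of the Jacobian.

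Next I will use the chain rule on $\ell(N(h)W_{\mathrm{out}},y)$. With row-vector conventions, $\nabla_h\ell = \nabla_z\ell\, W_{\mathrm{out}}^\top J_N(h)$, viewed as a row vector. Therefore
\[
\langle\nabla_h\ell,h\rangle=\nabla_z\ell\,W_{\mathrm{out}}^\top\,\bigl(J_N(h)h^\top\bigr)=0 .
\]
This holds for any differentiable $\ell$, since nothing about $\ell$ beyond differentiability at $z$ is used.

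As a concrete check, I will verify the kernel property directly for idealized RMSNorm with $\varepsilon=0$. Here $N(h)=\sqrt d\,\frac{h}{\|h\|}D_\gamma$, and its Jacobian is $\frac{\sqrt d}{\|h\|}\bigl(I-\frac{h^\top h}{\|h\|^2}\bigr)D_\gamma$ in the appropriate orientation. The projector annihilates $h$, so the kernel property holds. For LayerNorm, the map $h\mapsto (h-\mu_h)/\sigma_h$ is likewise scale invariant wherever $\sigma_h>0$, and the same argument applies.

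The main obstacle is the care needed at non-differentiable points, namely where $\sigma_h=0$ for LayerNorm, together with the "almost everywhere" qualifier. I would state the result at every $h\neq0$ where $N$ is differentiable. It is also worth noting that differentiability at $h$ implies differentiability along the whole ray $\{\alpha h:\alpha>0\}$, because $N(\alpha h)=N(h)$ there. The directional derivative along $h$ therefore exists and equals zero, so no genuine difficulty arises. I will also remark that with $\varepsilon>0$ the identity holds only approximately, with an error that vanishes as $\|h\|^2/d\gg\varepsilon$.
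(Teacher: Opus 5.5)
Your proposal is correct and follows essentially the same route as the paper: differentiate $N(\alpha h)=N(h)$ at $\alpha=1$ to place $h$ in the nullspace of the Jacobian, then apply the chain rule $\nabla_h\ell=\nabla_z\ell\,W_{\mathrm{out}}^\top J_N(h)$ to get zero. The only difference is the Jacobian orientation: you write $J_N(h)h^\top=0$, while the paper writes $h\,J_{\mathrm{Norm}}(h)=0$ with the transposed convention, and this does not affect the argument.
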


Proposition~\ref{prop:tangent} shows that normalization removes the radial gradient at the output. In this setting the loss cannot be reduced by simply rescaling the last hidden state.

\begin{proposition}[Without the final norm, cross-entropy pushes norms up]\label{prop:push}
If $z=h\,W_{\mathrm{out}}$ and $\ell$ is multiclass cross-entropy, then whenever the margin $m=z_y-\max_{j\neq y}z_j>0$,
\[
\langle \nabla_h\ell(z,y),h\rangle
=\langle \nabla_z\ell,z\rangle
\le -(1-\mathrm{softmax}(z)_y)m < 0,
\]
so a small gradient step along $-\nabla_h\ell$ increases $\|h\|_2^2$ to first order.
\end{proposition}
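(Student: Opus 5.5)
The plan is to first reduce the radial derivative in $h$ to a radial derivative in $z$ by using linearity, and then compute the cross-entropy gradient explicitly. Since $z=hW_{\mathrm{out}}$ with row vectors, the chain rule gives $\nabla_h\ell=\nabla_z\ell\,W_{\mathrm{out}}^\top$. Hence
\[
\langle \nabla_h\ell,h\rangle=\nabla_z\ell\,W_{\mathrm{out}}^\top h^\top=\nabla_z\ell\,(hW_{\mathrm{out}})^\top=\langle\nabla_z\ell,z\rangle,
\]
which is the equality in the statement. Equivalently, $\frac{d}{d\alpha}\ell(\alpha z,y)\big|_{\alpha=1}$ is the quantity to be bounded, since the logits are $1$-homogeneous in $h$.

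Next I will use the standard formula $\nabla_z\ell=p-e_y$, where $p=\mathrm{softmax}(z)$. This gives
\[
\langle\nabla_z\ell,z\rangle=\sum_j p_jz_j-z_y=\sum_{j\neq y}p_j z_j-(1-p_y)z_y .
\]
Writing $1-p_y=\sum_{j\neq y}p_j$, this becomes $\sum_{j\neq y}p_j(z_j-z_y)$. For each $j\neq y$ we have $z_j-z_y\le -m$. Since $p_j>0$, it follows that
\[
\langle\nabla_z\ell,z\rangle\le -m\sum_{j\neq y}p_j=-(1-p_y)m .
\]
Strict negativity uses $m>0$ together with $1-p_y>0$. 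The latter holds because softmax probabilities are strictly positive, so $p_y<1$ whenever $V\ge 2$. I will note this assumption; it is implicit in the definition of the margin.

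Finally, for the first-order claim, a step $h'=h-\eta\nabla_h\ell$ gives
\[
\|h'\|_2^2=\|h\|_2^2-2\eta\langle\nabla_h\ell,h\rangle+O(\eta^2).
\]
The linear term is strictly positive by the bound above, so $\|h\|_2^2$ increases for small $\eta>0$.

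No step is a serious obstacle. The only care needed is the regrouping $\sum_j p_jz_j-z_y=\sum_{j\neq y}p_j(z_j-z_y)$, which uses $\sum_j p_j=1$ and is what turns the expression into a margin bound. The other point to state explicitly is the strict positivity of $1-p_y$.
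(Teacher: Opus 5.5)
Your proof is correct and follows essentially the same route as the paper: reduce to $\langle \nabla_z\ell, z\rangle$ via the chain rule, use $\nabla_z\ell = p - e_y$, bound by the margin, and conclude with the first-order expansion of $\|h-\eta\nabla_h\ell\|_2^2$. Your regrouping $\sum_{j\neq y}p_j(z_j-z_y)$ is a compact form of the paper's bound $\mathbb{E}_p[z]\le p_y z_y+(1-p_y)(z_y-m)$, and you explicitly note that $1-p_y>0$, which the strict inequality needs and the paper leaves implicit.
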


Proposition~\ref{prop:push} captures a simple mechanism for unbounded scale growth when the final normalization is removed. Once the correct class has positive margin, scaling logits along their current direction reduces cross-entropy. We refer to this tendency toward increasing confidence through logit magnitude as logit chasing.

\begin{proposition}[Fixed-target scale loss provides a radial restoring force]\label{prop:aux-anchor}
Let $r(h)=\sqrt{\|h\|_2^2/d+\varepsilon}$ and define the fixed-target auxiliary loss on a token vector $h$ by
\[
\mathcal{L}_{\mathrm{aux}}(h)=\lambda\,(r(h)-s_{\mathrm{tgt}})^2,
\qquad \lambda>0.
\]
Then
\begin{align*}
  \nabla_h \mathcal{L}_{\mathrm{aux}}(h)
=\frac{2\lambda\,(r(h)-s_{\mathrm{tgt}})}{d\,r(h)}\,h, \quad
\langle \nabla_h \mathcal{L}_{\mathrm{aux}}(h),\,h\rangle
=\frac{2\lambda\,(r(h)-s_{\mathrm{tgt}})}{d\,r(h)}\,\|h\|_2^2.  
\end{align*}
In particular, if $r(h)>s_{\mathrm{tgt}}$ then a small gradient step on $\mathcal{L}_{\mathrm{aux}}$ decreases $\|h\|_2$
to first order, while if $r(h)<s_{\mathrm{tgt}}$ it increases $\|h\|_2$.
\end{proposition}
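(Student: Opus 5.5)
The plan is a direct chain-rule computation, organized in three short steps.

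\emph{Step 1 (gradient of $r$).} Write $r(h)=\sqrt{\|h\|_2^2/d+\varepsilon}$ as the composition $r=\phi(\|h\|_2^2)$ with $\phi(u)=\sqrt{u/d+\varepsilon}$. For row vectors, $\nabla_h\|h\|_2^2=2h$ and $\phi'(u)=\frac{1}{2d\,\phi(u)}$. Hence
\[
\nabla_h r(h)=\frac{h}{d\,r(h)}.
\]
This is well defined for every $h$ because $\varepsilon>0$ gives $r(h)>0$; when $\varepsilon=0$ we need $h\neq 0$, where $r$ is smooth.

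\emph{Step 2 (gradient of the auxiliary loss and its radial part).} Apply the chain rule to $\lambda(r-s_{\mathrm{tgt}})^2$, which gives the outer factor $2\lambda(r(h)-s_{\mathrm{tgt}})$. Multiplying by Step 1 yields the stated gradient formula. Taking the inner product with $h$ uses $\langle h,h\rangle=\|h\|_2^2$ and gives the second identity.

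\emph{Step 3 (first-order effect on the norm).} Consider the update $h'=h-\eta\nabla_h\mathcal{L}_{\mathrm{aux}}(h)$ with small $\eta>0$. Expanding,
\[
\|h'\|_2^2=\|h\|_2^2-2\eta\langle\nabla_h\mathcal{L}_{\mathrm{aux}},h\rangle+O(\eta^2).
\]
In the radial coefficient $\frac{2\lambda(r-s_{\mathrm{tgt}})}{d\,r}\|h\|_2^2$, the factors $\lambda$, $d$, $r$ and $\|h\|_2^2$ are all positive when $h\neq0$. So its sign is the sign of $r(h)-s_{\mathrm{tgt}}$.

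If $r(h)>s_{\mathrm{tgt}}$, then $\|h\|_2^2$, and hence $\|h\|_2$, decreases to first order. If $r(h)<s_{\mathrm{tgt}}$, it increases. In fact the gradient is parallel to $h$, so the update $h'=\bigl(1-\eta\,\tfrac{2\lambda(r-s_{\mathrm{tgt}})}{d\,r}\bigr)h$ is an exact rescaling of $h$. For small $\eta$ this makes the conclusion hold exactly, not only to first order.

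There is no real obstacle here. The only points needing care are the degenerate case $h=0$, where the radial statement is vacuous and which must be excluded if $\varepsilon=0$, and the row-vector convention in the gradient.
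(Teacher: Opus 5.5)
Your proposal is correct and matches the paper's proof step for step: compute $\nabla_h r(h)=h/(d\,r(h))$, apply the chain rule to get the gradient and its radial component, then conclude from the first-order expansion of $\|h-\eta\nabla_h\mathcal{L}_{\mathrm{aux}}\|_2^2$. Your two additions are also correct refinements that the paper leaves implicit: the gradient is parallel to $h$, so the update is an exact rescaling, and $h=0$ has to be excluded for the norm conclusion.
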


Proposition~\ref{prop:aux-anchor} shows that the auxiliary loss injects an explicit restoring force on scale. For LayerNorm with $s(h)=\sigma_h$, the same conclusion holds with the restoring direction given by the mean-centered vector. The derivative formulas are given in Appendix~\ref{app:proofs:ln-scale}.

Taken together, Propositions~\ref{prop:tangent}--\ref{prop:aux-anchor} motivate our default training choices. Keeping the final normalization provides an implicit anchor that removes the radial component at the output. If we also taper away the final normalization, the fixed-target scale loss provides an explicit alternative that counteracts the radial push from cross-entropy.

\section{Experiments}\label{sec:experiments}

This section evaluates the normalization-removal approach in three settings. First, we pre-train pre-norm transformers on TinyStories to test whether internal normalization can be tapered when training a small model from scratch. Second, we run a KV-cached decoding microbenchmark to measure the effect of folding the resulting sample-independent scalings into adjacent projections. Third, we fine-tune GPT-2 models using the LayerNorm variant and compare to prior LayerNorm removal results.

\subsection{Pre-training}\label{subsec:pretrain}

We test pre-training performance on TinyStories with causal, pre-norm transformers (depth 8, 16 attention heads at all widths, SwiGLU MLPs, RoPE) while varying width to obtain models at a range of scales from $1$ to $30$ million parameters. We repeat each configuration with 6 random seeds and report mean and standard deviation. Evaluation of tapered models is performed with the gate set to 0. Shaded regions in the plots indicate the $2.5$--$97.5$ percentile range across seeds for each configuration.

All pre-training runs use a SentencePiece tokenizer trained on TinyStories (vocabulary size $10$k) and a context length of $512$ tokens. We use AdamW with peak learning rate $3\times10^{-4}$, $\beta=(0.9,0.95)$, $5\%$ linear warmup, and cosine decay to zero. We use no dropout or weight decay in these experiments.

The default tapering setting includes the fixed-target scale anchoring loss from Section~\ref{subsec:aux}. For ablations we also report runs without it. Among the taper variants, Internal-Taper (+aux) with the final RMSNorm retained is the most stable and reliable across seeds, and we use it as the default configuration in our main comparisons. We additionally report All-Taper (+aux) to quantify the fully norm-free setting enabled by explicit scale anchoring.

For all tapered models we follow this gate schedule. During learning-rate warmup the gate is fixed at $g=1$. At the warmup boundary we compute $c^\star$ from $\gamma$-weighted EMAs with bias correction, copy $\gamma \to \tilde\gamma$, and then freeze $c^\star$. Post-warmup, $g$ is cosine-decayed to $0$. All other training details between same-size models are identical.

Figure~\ref{fig:loss} shows evolution of training loss for the Baseline and Internal-Taper (+aux). It can be seen that, with the final normalization kept in place, the trajectories are very similar. The main difference appears near the end of training, when the gate has reached zero and per-token statistics have been removed from the internal layers. Table~\ref{tab:scales} reports final validation losses. Internal-Taper (+aux) is within about $0.7$--$1.5\%$ relative loss of the Baseline, while All-Taper (+aux) is within about $0.8$--$1.8\%$.

\begin{figure}[t]
  \centering
  \begin{subfigure}[t]{.48\linewidth}
    \centering
    \includegraphics[width=0.8\linewidth]{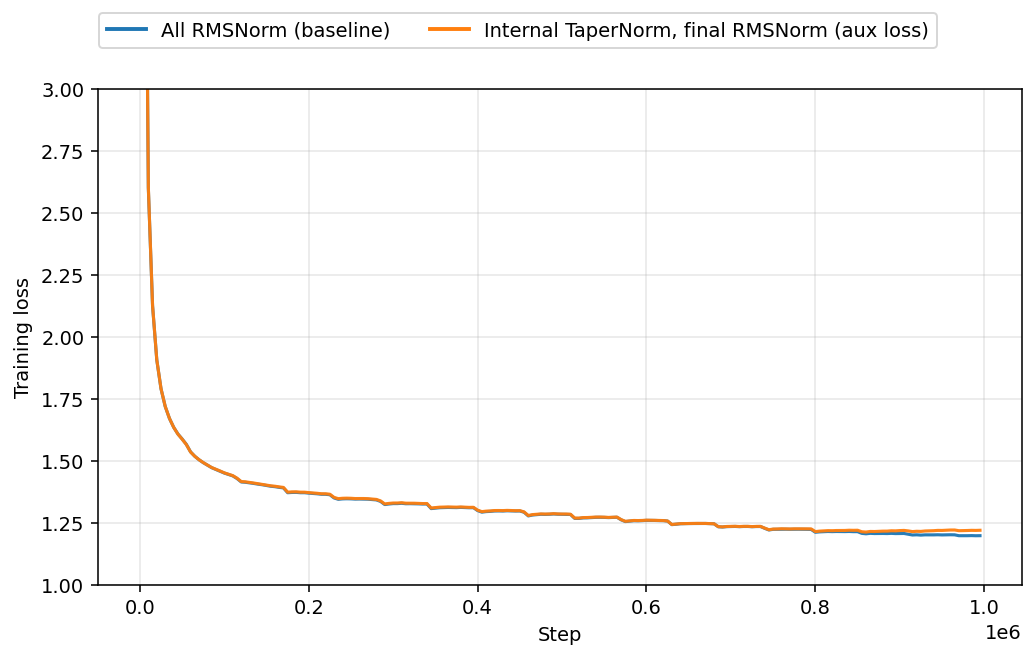}
    \caption{Training loss.}
    \label{fig:loss}
  \end{subfigure}\hfill
  \begin{subfigure}[t]{.48\linewidth}
    \centering
    \includegraphics[width=0.8\linewidth]{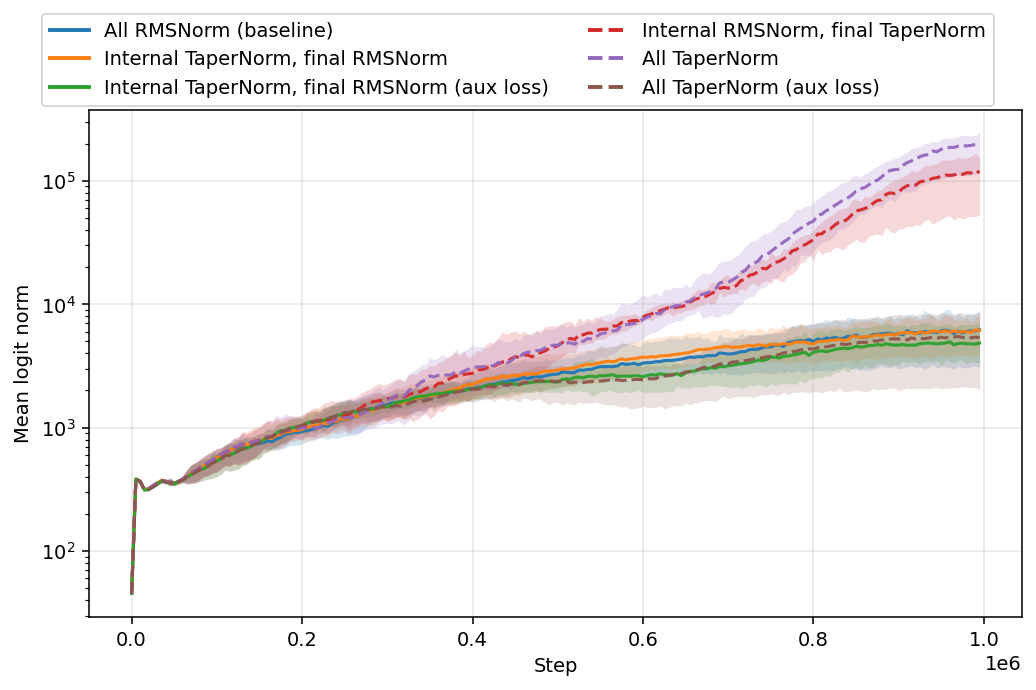}
    \caption{Mean logit $\ell_2$ norm.}
    \label{fig:logitl2}
  \end{subfigure}
  \vspace{-0.5em}
  \caption{Training dynamics and output-scale behavior. Left: training loss vs.\ step for the Baseline and Internal-Taper (+aux). Right: without a scale anchor, models with the final norm removed (dashed) can exhibit logit chasing, consistent with Proposition~\ref{prop:push}; with the fixed-target scale loss, logit growth is strongly suppressed in the All-Taper setting.}
  \label{fig:loss-logitl2}
\end{figure}

\begin{table}[t]
\centering
\small
\renewcommand{\arraystretch}{0.8}
\caption{Validation loss for Baseline, Internal-Taper (+aux), and All-Taper (+aux). Percentages in parentheses show the relative gap vs.\ the Baseline. $^\dagger$ For 9M All-Taper (+aux), we report mean$\pm$std over 5 seeds after excluding one outlier run with unusually high validation loss.}
\begin{tabular}{@{}rccc@{}}
\toprule
Model size & Baseline & \shortstack{Internal-Taper (+aux)} & \shortstack{All-Taper (+aux)} \\
\midrule
$1$M  & 2.1538$\pm$0.0025
      & \shortstack{2.1852$\pm$0.0040 (+1.46\%)}
      & \shortstack{2.1930$\pm$0.0035 (+1.82\%)} \\
$3$M  & 1.7561$\pm$0.0010
      & \shortstack{1.7828$\pm$0.0029 (+1.52\%)}
      & \shortstack{1.7874$\pm$0.0028 (+1.78\%)} \\
$9$M  & 1.4620$\pm$0.0022
      & \shortstack{1.4837$\pm$0.0016 (+1.48\%)}
      & \shortstack{1.4837$\pm$0.0023$^\dagger$(+1.48\%)} \\
$30$M & 1.2768$\pm$0.0018
      & \shortstack{1.2859$\pm$0.0010 (+0.71\%)}
      & \shortstack{1.2868$\pm$0.0041 (+0.78\%)} \\
\bottomrule
\end{tabular}
\label{tab:scales}
\end{table}

Figures~\ref{fig:logitl2} and~\ref{fig:gradgrid} show the effect of output scale anchoring. Without an explicit anchor, removing the final normalization layer causes rapid growth in logit magnitudes, consistent with Proposition~\ref{prop:push}. It can also be seen that gradient magnitudes cluster according to the presence or absence of the final normalization. With the fixed-target scale loss enabled, the All-Taper model has gradient magnitudes similar to the Internal-Taper model. This indicates that explicit scale anchoring can replace part of the stabilizing effect of final normalization in this setting.

\begin{figure}[t]
  \centering
  \includegraphics[width=.95\linewidth]{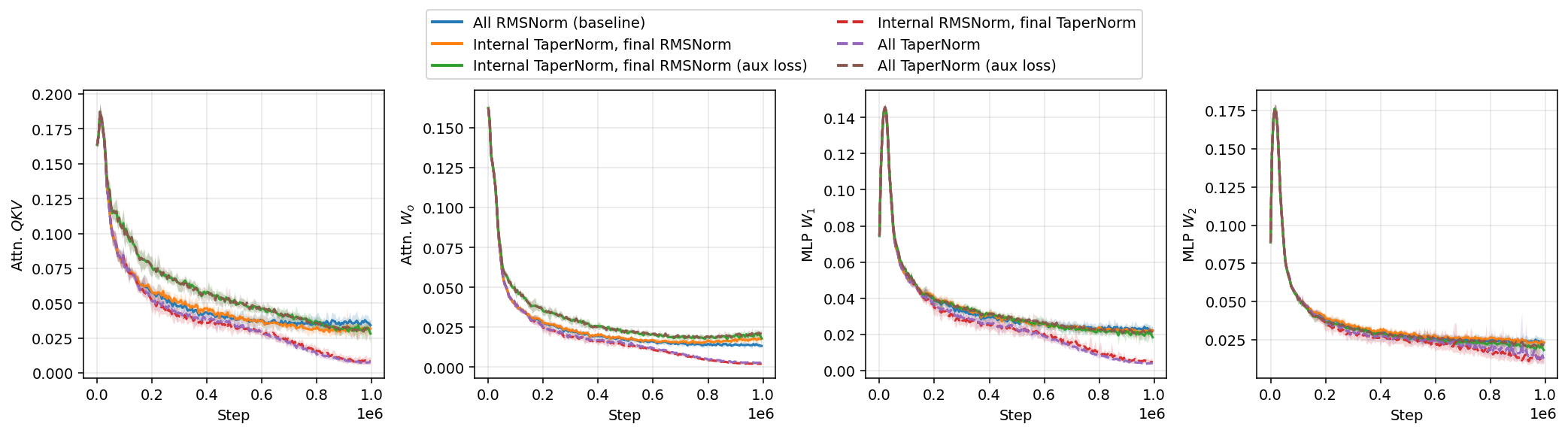}
  \vspace{-0.5em}
  \caption{Average gradient norms across all transformer blocks by weight type. Without explicit scale anchoring, gradients cluster primarily by presence vs.\ absence of the final normalization. With the fixed-target scale loss enabled, the gradient-magnitude gap between Internal-Taper and All-Taper largely disappears.}
  \label{fig:gradgrid}
\end{figure}

\subsection{KV-cached decoding microbenchmark}\label{subsec:efficiency}

A practical consequence of tapering normalization is that, once $g\rightarrow 0$, each tapered layer becomes sample-independent and can be removed from the inference graph by weight folding. We measure this effect on the TinyStories pre-trained Internal-Taper (+aux) $d=512$ model ($\approx30$M parameters). Internal-Taper is used because it is more stable than All-Taper while providing nearly identical computational savings from internal normalization removal.

We benchmark autoregressive generation with standard KV caching on a single NVIDIA H100 80GB (CUDA 12.1, PyTorch 2.4.1+cu121) using bf16. We enable TF32 matmul where applicable. For each setting, a prompt of length $T$ is provided and tokens are generated one at a time while reusing the KV cache. Throughput is reported in tokens per second (tok/s).

We compare the following settings.
\begin{itemize}
    \item \textbf{Baseline} (RMSNorm).
    \item \textbf{Internal-Taper (unfused)} where the tapered normalizers are exported to fixed elementwise scalings but left as explicit operations.
    \item \textbf{Internal-Taper (fused)} where the internal scalings are folded into the attention QKV and MLP input projections, allowing normalization modules to be replaced by identity.
\end{itemize}

\begin{table}[t]
\centering
\small
\renewcommand{\arraystretch}{0.8}
\caption{Autoregressive decoding throughput (tok/s, higher is better) with KV caching on an NVIDIA H100 80GB (bf16, PyTorch 2.4.1+cu121, CUDA 12.1). Parentheses show speedup relative to the Baseline RMSNorm model.}
\begin{tabular}{@{}ccccc@{}}
\toprule
Batch $B$ & Prompt $T$ & Baseline & \shortstack{Internal-Taper\\(unfused)} & \shortstack{Internal-Taper\\(fused)} \\
\midrule
1 & 128 & 336 & 383 (1.14$\times$) & 397 (1.18$\times$) \\
1 & 256 & 340 & 383 (1.13$\times$) & 398 (1.17$\times$) \\
4 & 128 & 1{,}253 & 1{,}382 (1.10$\times$) & 1{,}441 (1.15$\times$) \\
4 & 256 & 1{,}250 & 1{,}378 (1.10$\times$) & 1{,}437 (1.15$\times$) \\
\bottomrule
\end{tabular}
\label{tab:efficiency}
\end{table}

Table~\ref{tab:efficiency} reports KV-cached decoding throughput. It can be seen that the unfused Internal-Taper model gives a $1.10$--$1.14\times$ improvement over the RMSNorm baseline, while folding the internal scaling into adjacent projections gives a $1.15$--$1.18\times$ improvement. These results indicate that normalization removal gives measurable gains even when KV caching is used. The measured speedups remain specific to the model size, hardware, software stack, batch size, and prompt lengths tested.

\subsection{Fine-tuning GPT-2 with gated LayerNorm removal}\label{sec:ft-gpt2}

We fine-tune GPT-2 family models and compare to the LayerNorm removal results of \citet{baroni2025transformersdontneedlayernorm}. They show that LayerNorm layers in GPT-2 can be removed at inference time via fine-tuning with only a small increase in validation loss. Their key mechanism is to replace each LayerNorm with a linear surrogate, \emph{FakeLN}, in which the per-token standard deviation $\sigma$ is replaced by a single scalar $\sigma_{\mathrm{avg}}$ computed as the average of tokenwise standard deviations
\[
\mathrm{FakeLN}(x)=\frac{x-\mu}{\sigma_{\mathrm{avg}}}D_{\gamma}+\beta.
\]
Because removing all LayerNorm blocks simultaneously destabilizes the model, they adopt a sequential removal protocol and add an auxiliary loss that encourages tokenwise standard deviations to be consistent across sequence positions. This term is closely related to our scale-based view of normalization, but its target is batch-dependent and therefore does not pin an absolute pre-logit scale. Our fixed-target loss instead anchors the pre-logit residual stream to a single absolute scale, which better aligns with the tapering approach since the scaling branch learns its own gain. Additional details on this comparison are given in Appendix~\ref{app:ft-details}.

We compare four settings.
\begin{itemize}
    \item \textbf{Vanilla \citep{baroni2025transformersdontneedlayernorm}.} Standard fine-tuned checkpoints with all LayerNorm layers intact.
    \item \textbf{FakeLN \citep{baroni2025transformersdontneedlayernorm}.} We use the LayerNorm-free checkpoints published by the authors and evaluate them using their released code.
    \item \textbf{Internal-TaperLN (ours, +fixed-target aux).} We taper every internal LayerNorm using TaperLN (Appendix~\ref{app:TaperLN}) and cosine-decay the shared gate to $g=0$, while keeping the final LayerNorm active (locked at $g=1$).
    \item \textbf{All-TaperLN (ours, +fixed-target aux).} We taper all LayerNorm layers, including the final LayerNorm, using TaperLN and cosine-decay the shared gate to $g=0$, yielding a fully norm-free model at inference.
\end{itemize}
We fine-tune on OpenWebText \cite{Gokaslan2019OpenWeb}, reuse the hyperparameters from \citet{baroni2025transformersdontneedlayernorm} and set the tapering window by approximately aligning the end of warmup with their first LayerNorm removal step and the end of cosine gate decay with their first LayerNorm-free checkpoint. The exact schedule used is provided in Appendix~\ref{app:ft-details}.

Table~\ref{tab:ft-end} reports end-of-run cross-entropy on OpenWebText, The Pile \cite{gao2020pile}, and The Pile-filtered. Across GPT-2 Small, Medium, Large, and XL, Internal-TaperLN achieves losses in the same range as the LayerNorm-free FakeLN checkpoints while keeping the final LayerNorm as an implicit scale anchor. All-TaperLN yields a fully norm-free model and remains competitive in several settings, with a modest increase in others. In contrast to the staged FakeLN removal procedure, our taper protocols use a single global cosine gate.

\begin{table}[t]
\centering
\scriptsize
\renewcommand{\arraystretch}{0.7}
\caption{End metrics on OpenWebText (OWT), The Pile, and The Pile-filtered for GPT-2 S/M/L/XL.
Vanilla and FakeLN (+Baroni-aux) results follow \citet{baroni2025transformersdontneedlayernorm}.
$^\dagger$ \citet{baroni2025transformersdontneedlayernorm} report that for GPT-2 XL LayerNorm free on The Pile, the mean CE loss can be dominated by a very small number of outlier sequences.}
\begin{tabularx}{\columnwidth}{@{}Lccc@{}}
\toprule
Model/Protocol & OWT & Pile & \shortstack{Pile-filtered}\\
\midrule
S/Vanilla & 3.0126 & 2.8511 & 2.8112 \\
S/FakeLN (+Baroni-aux) & 3.0797 & 2.8852 & 2.8757 \\
S/Internal-TaperLN (+fixed-target aux) & 3.0584 & 2.8381 & 2.8374 \\
S/All-TaperLN (+fixed-target aux) & 3.0685 & 2.8326 & 2.8416 \\
\midrule
M/Vanilla & 2.7390 & 2.5752 & 2.5724 \\
M/FakeLN (+Baroni-aux) & 2.7642 & 2.6579 & 2.6352 \\
M/Internal-TaperLN (+fixed-target aux) & 2.7550 & 2.5937 & 2.5936 \\
M/All-TaperLN (+fixed-target aux) & 2.7715 & 2.6248 & 2.6191 \\
\midrule
L/Vanilla & 2.6240 & 2.6233 & 2.5074 \\
L/FakeLN (+Baroni-aux) & 2.6384 & 2.7504 & 2.5159 \\
L/Internal-TaperLN (+fixed-target aux) & 2.6277 & 2.7860 & 2.4910 \\
L/All-TaperLN (+fixed-target aux) & 2.6230 & 2.6539 & 2.4814 \\
\midrule
XL/Vanilla & 2.4799 & 2.4673 & 2.3821 \\
XL/FakeLN (+Baroni-aux) & 2.5052 & 130.2197$^\dagger$ & 2.3992 \\
XL/Internal-TaperLN (+fixed-target aux) & 2.5144 & 2.5303 & 2.3844 \\
XL/All-TaperLN (+fixed-target aux) & 2.5305 & 2.6504 & 2.4000 \\
\bottomrule
\end{tabularx}
\label{tab:ft-end}
\end{table}

\section{Discussion}\label{sec:discussion}

The results indicate that internal and final normalization play different roles in pre-norm transformers. Internal-Taper removes the repeated normalizers inside transformer blocks while keeping the final normalization active. This is the most stable setting in our experiments and provides nearly all of the computational benefit from folding internal scalings. All-Taper removes the final normalization as well, but depends more strongly on explicit scale anchoring.

The final normalization acts as an output scale anchor. It controls the radial degree of freedom of the pre-logit representation because normalization removes radial gradients at the output (Proposition~\ref{prop:tangent}). Without an anchor, cross-entropy induces logit chasing (Proposition~\ref{prop:push}). The fixed-target scale loss provides an explicit alternative anchor by introducing a radial restoring force (Proposition~\ref{prop:aux-anchor}).

This view explains the main training behavior observed in the ablations. When the final normalization is removed without an explicit anchor, logit norms grow and gradient magnitudes separate according to the presence of the final normalization. When the fixed-target scale loss is enabled, this separation largely disappears. This demonstrates that explicit scale anchoring can replace part of the stabilizing role of final normalization in the tested setting.

\section{Limitations and future work}\label{sec:limitations}

The pre-training experiments are limited to TinyStories models up to $30$M parameters with depth $8$. They do not establish that the same tapering schedule and calibration procedure transfer to billion-parameter pre-training, deeper models, or longer contexts. Testing the method across depth, width, data scale, and training duration is an important direction for future work.

The approach ultimately represents selected sample-dependent normalization layers as linear or affine maps. As a result, robustness to activation outliers and distribution shift remains an open question. Although the model continues training after the scaling branch is introduced, the experiments do not test the large-scale outlier regimes observed in modern language models.

The fixed-target scale loss is studied for cross-entropy language modeling. Other objectives may interact differently with output scale. The recipe also contains schedule and loss-weight choices.

The efficiency results are microbenchmarks on a single hardware and software stack and measure KV-cached autoregressive decoding for the $30$M TinyStories model. End-to-end serving speedups may differ with model size, kernel availability, attention implementation, batching, and runtime optimizations.

\section{Conclusion}\label{sec:conclusion}

This work introduced a gated normalization-removal approach that starts from standard RMSNorm or LayerNorm behavior and transitions selected normalizers to learned sample-independent rescalings. The analysis identifies output scale anchoring as a distinct role of the final normalization layer: it removes the radial output gradient, while without an anchor cross-entropy induces logit chasing. A fixed-target scale loss provides an explicit alternative anchor and suppresses this behavior when the final normalization is also tapered away. Empirically, internal tapering gives modest validation-loss gaps on TinyStories, competitive GPT-2 LayerNorm-removal results, and up to $1.18\times$ higher H100 KV-cached decoding throughput after folding internal scalings.

\bibliographystyle{plainnat}
\bibliography{bibliography}

\newpage
\appendix
\section{Proofs for Section~\ref{sec:theory}}\label{app:proofs}

\subsection*{Preliminaries and assumptions}
We recall $r(h)\coloneqq \sqrt{\|h\|_2^2/d+\varepsilon}$ and $D_\gamma\coloneqq \mathrm{diag}(\gamma)$, $D_{\tilde\gamma}\coloneqq \mathrm{diag}(\tilde\gamma)$. Expectations $\mathbb{E}[\cdot]$ are over mini-batch elements and sequence positions unless stated.

\subsection{Proof of Proposition~\ref{prop:tangent}}\label{app:proofs:prop1}
\begin{proof}
A map $\mathrm{Norm}$ is $0$-homogeneous if $\mathrm{Norm}(\alpha h)=\mathrm{Norm}(h)$ for all $\alpha>0$ and $h\neq 0$. Differentiating in $\alpha$ at $\alpha=1$ gives $\frac{\mathrm{d}}{\mathrm{d}\alpha}\mathrm{Norm}(\alpha h)\big|_{\alpha=1}=h\,J_{\mathrm{Norm}}(h)=0$, so $h$ lies in the left nullspace of the Jacobian $J_{\mathrm{Norm}}(h)$.
With $z=\mathrm{Norm}_{\mathrm{final}}(h)\,W_{\mathrm{out}}$ we have by the chain rule
\[
\nabla_h \ell(z,y) \;=\; \nabla_z \ell(z,y)\,W_{\mathrm{out}}^\top\,J_{\mathrm{Norm}}(h)^\top.
\]
Taking the inner product with $h$,
\[
\langle \nabla_h \ell, h\rangle
= \nabla_z \ell\,W_{\mathrm{out}}^\top\,J_{\mathrm{Norm}}(h)^\top\,h^\top
= \nabla_z \ell\,W_{\mathrm{out}}^\top\,(h\,J_{\mathrm{Norm}}(h))^\top
=0.
\]
The identity is exact for exactly $0$-homogeneous maps. RMSNorm and LayerNorm satisfy this property in the limit $\varepsilon=0$.
\end{proof}

\subsection{Proof of Proposition~\ref{prop:push}}\label{app:proofs:prop2}
\begin{proof}
For multiclass cross-entropy with softmax probabilities $p=\mathrm{softmax}(z)$, $\nabla_z \ell = p - e_y$. Then
\[
\langle \nabla_z \ell, z\rangle \;=\; \sum_i z_i(p_i-1_{i=y}) \;=\; \sum_i p_i z_i - z_y \;=\; \mathbb{E}_p[z] - z_y.
\]
Let $m=z_y-\max_{j\neq y} z_j>0$ and $b=\max_{j\neq y} z_j \le z_y - m$. Since $\sum_{j\neq y}p_j=1-p_y$ and $z_j\le b$ for $j\neq y$,
\[
\mathbb{E}_p[z] \;=\; p_y z_y + \sum_{j\neq y} p_j z_j \;\le\; p_y z_y + (1-p_y) b \;\le\; p_y z_y + (1-p_y)(z_y-m) \;=\; z_y - (1-p_y)m.
\]
Hence $\langle \nabla_z \ell, z\rangle \le -(1-p_y)m < 0$. With $z=h\,W_{\mathrm{out}}$ and $\nabla_h \ell = \nabla_z \ell\,W_{\mathrm{out}}^\top$, we have
$\langle \nabla_h \ell, h\rangle = \langle \nabla_z \ell, z\rangle$.
A gradient descent step $h^+=h-\eta \nabla_h \ell$ therefore satisfies
\[
\|h^+\|_2^2 - \|h\|_2^2 \;=\; -2\eta\,\langle \nabla_h \ell, h\rangle + O(\eta^2) \;\ge\; 2\eta\,(1-p_y)m + O(\eta^2),
\]
so the norm increases to first order.
\end{proof}

\subsection{Proof of Proposition~\ref{prop:aux-anchor}}\label{app:proofs:aux-anchor}
\begin{proof}
Let $r(h)=\sqrt{\|h\|_2^2/d+\varepsilon}$. Since $\nabla_h \|h\|_2^2 = 2h$, we have
\[
\nabla_h r(h)
= \frac{1}{2}\left(\|h\|_2^2/d+\varepsilon\right)^{-1/2}\cdot \frac{2h}{d}
= \frac{h}{d\,r(h)}.
\]
Therefore
\[
\nabla_h \mathcal{L}_{\mathrm{aux}}(h)
= 2\lambda\,(r(h)-s_{\mathrm{tgt}})\,\nabla_h r(h)
= \frac{2\lambda\,(r(h)-s_{\mathrm{tgt}})}{d\,r(h)}\,h.
\]
Taking the inner product with $h$ yields the second identity, and the final statement follows from the first-order norm expansion $\|h-\eta\nabla_h\mathcal{L}_{\mathrm{aux}}\|_2^2=\|h\|_2^2-2\eta\langle\nabla_h\mathcal{L}_{\mathrm{aux}},h\rangle+O(\eta^2)$.
\end{proof}

\subsection{LayerNorm scale gradient formulas}\label{app:proofs:ln-scale}

Let $h\in\mathbb{R}^{1\times d}$ be a token vector, let $\mu_h=\tfrac{1}{d}\sum_{i=1}^d h_i$, and define the mean-centered vector $\bar h \coloneqq h-\mu_h\mathbf{1}$. Let
\[
\sigma_h=\sqrt{\tfrac{1}{d}\|\bar h\|_2^2+\varepsilon}.
\]
Then
\[
\nabla_h \sigma_h \;=\; \frac{\bar h}{d\,\sigma_h}.
\]
Therefore, for the fixed-target auxiliary loss $\lambda(\sigma_h-s_{\mathrm{tgt}})^2$,
\[
\nabla_h \big(\lambda(\sigma_h-s_{\mathrm{tgt}})^2\big)
\;=\;
\frac{2\lambda(\sigma_h-s_{\mathrm{tgt}})}{d\,\sigma_h}\,\bar h.
\]
This shows that the auxiliary term pulls $\sigma_h$ toward $s_{\mathrm{tgt}}$ through a restoring force in the mean-zero subspace.

\subsection{Derivation of $c^\star$ in~\eqref{eq:cstar}}\label{app:proofs:cstar}
Let $A(h)=\tfrac{h}{r(h)}D_\gamma$ and $B(h)=hD_\gamma$. The objective is
\[
J(c)=\mathbb{E}\big[\|A(h)-c\,B(h)\|_2^2\big]=\mathbb{E}\big[\|A\|_2^2\big]-2c\,\mathbb{E}\big[\langle A,B\rangle\big]+c^2\mathbb{E}\big[\|B\|_2^2\big].
\]
Differentiating and setting to zero gives
\[
0 = J'(c) = -2\,\mathbb{E}\big[\langle A,B\rangle\big] + 2c\,\mathbb{E}\big[\|B\|_2^2\big]
\quad\Rightarrow\quad
c^\star = \frac{\mathbb{E}\big[\langle A,B\rangle\big]}{\mathbb{E}\big[\|B\|_2^2\big]}= \frac{\mathbb{E}\!\left[\|hD_\gamma\|_2^2/r(h)\right]}{\mathbb{E}\!\left[\|hD_\gamma\|_2^2\right]}.
\]

\subsection{EMA estimators for $c^\star$}\label{app:proofs:ema}
During warmup we estimate
\[
\overline{a}=\mathbb{E}\big[\|hD_\gamma\|_2^2/r(h)\big],\qquad
\overline{b}=\mathbb{E}\big[\|hD_\gamma\|_2^2\big]
\]
with exponential moving averages (EMAs) $s_{\mathrm{num}},s_{\mathrm{den}}$ using an EMA update rate $\mu\in(0,1)$
(i.e., $\mu$ is the new-sample weight). Concretely, for a stream of observations $x_k$ we use
\[
s \leftarrow (1-\mu)\,s + \mu\,x_k,\qquad s^{(0)}=0.
\]
After $n$ updates, bias correction yields
\[
\widehat{s}_{\mathrm{num}}=\frac{s_{\mathrm{num}}}{1-(1-\mu)^n},\qquad
\widehat{s}_{\mathrm{den}}=\frac{s_{\mathrm{den}}}{1-(1-\mu)^n},\qquad
c^\star=\frac{\widehat{s}_{\mathrm{num}}}{\widehat{s}_{\mathrm{den}}+\delta},
\]
with small $\delta>0$ for numerical stability. After this assignment, $c^\star$ is frozen for the remainder of training.

\section{LayerNorm version of TaperNorm (TaperLN)}\label{app:TaperLN}

This section provides the LayerNorm analogue of the RMSNorm\ formulation used in the main text, including the definition of the layer, the optimal alignment scalar $c^\star$, EMA estimators, and the substitutions required for the theoretical results.

\paragraph{Definition.}
For a token row vector $h\in\mathbb{R}^{1\times d}$, let
\[
\mu_h=\tfrac{1}{d}\sum_{i=1}^d h_i,\qquad
\sigma_h=\sqrt{\tfrac{1}{d}\sum_{i=1}^d (h_i-\mu_h)^2+\varepsilon},
\qquad D_\gamma=\mathrm{diag}(\gamma),\quad D_{\tilde\gamma}=\mathrm{diag}(\tilde\gamma).
\]
Our GPT-2 fine-tuning code uses a TaperLN module that interpolates between the standard LayerNorm branch and a mean-centered scaling branch:
\begin{equation}\label{eq:TaperLN-def}
\mathrm{TaperLN}(h;g)
=
\beta
+
g\,\frac{h-\mu_h}{\sigma_h}\,D_\gamma
+
(1-g)\,c\,(h-\mu_h)\,D_{\tilde\gamma},
\qquad g\in[0,1].
\end{equation}
For $g=1$ the layer equals LayerNorm$(h)$; for $g=0$ it becomes a fixed affine map
built from mean subtraction and a featurewise scaling.

At $g=0$, the map is affine in $h$:
it equals $hA+\beta$ where $A=c\,(I-\tfrac{1}{d}\mathbf{1}\mathbf{1}^\top)D_{\tilde\gamma}$.
Thus, for a following projection $x\mapsto xW+b$ we can fold the affine map into $(W,b)$.

\paragraph{Calibration of $c$ (LayerNorm case).}
At the warmup boundary we align the $g=0$ branch to the $g=1$ LayerNorm branch by choosing $c$ to minimize a one-dimensional least-squares objective analogous to~\eqref{eq:cstar}, but with mean-centering and $\sigma_h$:
let $\bar h \coloneqq h-\mu_h\mathbf{1}$ and define
\begin{equation}\label{eq:cstar-ln}
c^\star
\;\in\;\arg\min_{c\in\mathbb{R}}~\mathbb{E}\!\left[\left\|\frac{\bar h}{\sigma_h}D_\gamma - c\,\bar h D_\gamma\right\|_2^2\right]
\;=\;
\frac{\mathbb{E}\!\left[\|\bar hD_\gamma\|_2^2/\sigma_h\right]}{\mathbb{E}\!\left[\|\bar hD_\gamma\|_2^2\right]}.
\end{equation}
This is the exact analogue of~\eqref{eq:cstar} under the substitutions $h\mapsto \bar h$ and $r(h)\mapsto \sigma_h$.

\paragraph{EMA estimators and freezing.}
During warmup we track the numerator and denominator in~\eqref{eq:cstar-ln} with EMAs and apply the same bias correction as in Appendix~\ref{app:proofs:ema}. At the warmup boundary we set $c\leftarrow c^\star$, copy $\gamma\rightarrow\tilde\gamma$, and freeze $c$ for the remainder of training (while $\gamma$ and $\tilde\gamma$ remain trainable; see Section~\ref{subsec:taper}).

\paragraph{Relationship to the RMSNorm\ variant.}
All statements in Section~\ref{sec:theory} hold for TaperLN under the same regularity assumptions after the substitutions
\[
\frac{h}{r(h)} \;\longrightarrow\; \frac{h-\mu_h}{\sigma_h},
\qquad
\Delta(h)=c\,hD_{\tilde\gamma}-\frac{h}{r(h)}D_\gamma
\;\longrightarrow\;
\Delta_{\mathrm{LN}}(h)=c\,(h-\mu_h)D_{\tilde\gamma}-\frac{h-\mu_h}{\sigma_h}D_\gamma.
\]
In particular, Proposition~\ref{prop:tangent} continues to apply because the ideal LayerNorm map (with $\varepsilon=0$) is $0$-homogeneous for positive scalings. If a bias $\beta$ is included, it does not affect $0$-homogeneity for $\alpha>0$ because it is invariant to radial scalings; Proposition~\ref{prop:tangent} therefore does not require $\beta = 0$.

\section{Training details}\label{app:exp}

This appendix summarizes the TinyStories pre-training setup used in Section~\ref{sec:experiments}.

\paragraph{Compute resources.}
All reported pre-training, fine-tuning, and throughput experiments were carried out on a single NVIDIA H100 GPU.
For TinyStories pre-training, a single run took approximately $4.3$, $4.8$, $8.2$, and $15.4$ hours for widths
$d=64$, $128$, $256$, and $512$, respectively.
These timings use eager PyTorch execution, which allows detailed training diagnostics such as gradient statistics to be recorded.
The throughput microbenchmark hardware and software stack is described in Section~\ref{subsec:efficiency}.

\paragraph{Data and tokenizer.}
We pre-train on TinyStories using a SentencePiece tokenizer trained on TinyStories with vocabulary size $10$k.
Training sequences are formed by sampling random contiguous windows of length $T=512$ from a concatenated token stream.
We train with next-token prediction using a one-token shift.

\paragraph{Model.}
All pre-training models are causal, pre-norm transformers with depth $L=8$.
All widths use $16$ attention heads, RoPE positional encoding, and SwiGLU MLPs with expansion factor $4$.
Linear layers in attention and MLP use no bias.
Output weights are tied to the token embedding matrix.
Dropout is disabled.

\paragraph{Optimization.}
We use AdamW with $\beta=(0.9,0.95)$ and peak learning rate $3\times10^{-4}$.
The schedule uses linear warmup for $5\%$ of total steps followed by cosine decay to zero.
We use no weight decay.
We apply global gradient norm clipping with threshold $1.0$.
The batch size is $16$ sequences of length $512$.

\paragraph{Run lengths.}
Total training steps vary by model width as shown in Table~\ref{tab:ts-steps}.
Warmup uses $5\%$ of total steps for every run.

\begin{table}[t]
\centering
\small
\setlength{\tabcolsep}{10pt}
\renewcommand{\arraystretch}{1.05}
\caption{Total TinyStories pre-training steps by model width.}
\begin{tabular}{@{}rc@{}}
\toprule
Width $d$ & Total steps \\
\midrule
64  & 500{,}000 \\
128 & 500{,}000 \\
256 & 750{,}000 \\
512 & 1{,}000{,}000 \\
\bottomrule
\end{tabular}
\label{tab:ts-steps}
\end{table}

\paragraph{TaperNorm, EMA rates, and scale loss.}
For any layer that is tapered, we keep the gate at $g=1$ during learning-rate warmup.
At the warmup boundary, each tapered layer computes its alignment scalar $c$ using $\gamma$-weighted EMA estimates of
the quantities in Section~\ref{subsec:cstar}, copies $\gamma\rightarrow\tilde\gamma$, and then freezes $c$.
After warmup, we cosine-decay the shared gate value $g(k)$ from $1$ to $0$ over the remaining training steps.

We use the same EMA update rate $\mu=0.01$ (new-sample weight) for both (i) the calibration EMAs used to set $c$
and (ii) the warmup EMA used to set the fixed target $s_{\mathrm{tgt}}$.
When enabled, we apply the fixed-target scale anchoring loss from Section~\ref{subsec:aux} to the pre-logit residual stream
with weight $\lambda=0.1$ for all TinyStories runs. The target $s_{\mathrm{tgt}}$ is set to the bias-corrected EMA of the batch-mean scale during warmup and is held fixed after warmup.

\section{GPT-2 fine-tuning details}\label{app:ft-details}

This appendix summarizes the fine-tuning setup used in Section~\ref{sec:ft-gpt2}.

\paragraph{Relation to the Baroni auxiliary loss.}
\citet{baroni2025transformersdontneedlayernorm} use an auxiliary loss that encourages consistent tokenwise standard deviations across sequence positions,
\begin{equation*}
\mathcal{L}_{\mathrm{Baroni}}
=\lambda\,\mathbb{E}_{b,s}\big(\sigma_{b,s}-\hat\sigma\big)^2,
\qquad
\hat\sigma=\frac{1}{|\mathcal{M}|}\sum_{(b,s)\in \mathcal{M}}\sigma_{b,s},
\end{equation*}
where $\mathcal{M}$ excludes the first token, padding tokens, and end-of-text positions.
This loss has zero penalty whenever $\sigma_{b,s}$ is constant across the selected positions, so it equalizes per-position scale without fixing an absolute pre-logit scale.
This differs from our fixed-target loss, which anchors the pre-logit residual stream to a scalar target set during warmup.

\paragraph{Data and tokenization.}
We fine-tune on the HuggingFace \texttt{openwebtext} dataset.
We tokenize with the corresponding GPT-2 tokenizer, append an end-of-text token after each document, concatenate the token stream,
and split into contiguous blocks of length $1024$.

\paragraph{Optimization and schedule.}
We reuse the fine-tuning hyperparameters of \citet{baroni2025transformersdontneedlayernorm}.
Training uses bf16, global gradient norm clipping with threshold $1.0$, and a cosine learning-rate schedule with warmup.
We use gradient accumulation to match a fixed global token batch size per optimizer step.

\paragraph{Taper schedule, EMA rates, and scale loss.}
Gates stay at $g=1$ until taper start.
At the taper start, each module freezes its alignment scalar $c$ using EMAs of the quantities used for calibration
(see Appendix~\ref{app:TaperLN}), copies $\gamma\rightarrow\tilde\gamma$, and then cosine-decays $g$ to $0$ over the taper window.

We use the same EMA update rate $\mu=0.1$ (new-sample weight) while $g=1$ (prior to taper start) for both (i) the calibration EMAs and (ii) the EMA used to set the fixed target $s_{\mathrm{tgt}}$.
The fixed-target scale loss weight is $\lambda=0.1$ for GPT-2 Small/Medium/Large and $\lambda=0.01$ for GPT-2 XL.

\begin{table}[t]
\centering
\small
\setlength{\tabcolsep}{6pt}
\renewcommand{\arraystretch}{1.05}
\caption{GPT-2 fine-tuning and taper schedule.}
\begin{tabular}{@{}lcccc@{}}
\toprule
Model & total steps & warmup end $k_{\mathrm{w}}$ & taper start $k_{\mathrm{start}}$ & taper end $k_{\mathrm{end}}$ \\
\midrule
GPT-2 Small  & 300 & 25 & 25 & 100 \\
GPT-2 Medium & 500 & 10 & 20 & 200 \\
GPT-2 Large  & 600 & 30 & 30 & 500 \\
GPT-2 XL     & 800 & 20 & 50 & 700 \\
\bottomrule
\end{tabular}
\label{tab:gpt2-schedule}
\end{table}

\paragraph{Evaluation.}
We reuse code published by \citet{baroni2025transformersdontneedlayernorm} for evaluation and report cross-entropy on OpenWebText held-out blocks and on pre-tokenized subsets of The Pile and The Pile-filtered.

\end{document}